\documentclass{article}
\usepackage{geometry}
\geometry{verbose,tmargin=2cm,bmargin=2cm,lmargin=2.5cm,rmargin=2.5cm}
\usepackage[latin9]{inputenc}
\usepackage{amsthm} 
\usepackage{amsmath} 
\usepackage{amssymb} 
\usepackage{varioref}
\usepackage[numbers]{natbib}

\usepackage{times}
\usepackage{graphicx}
\usepackage{subcaption} 

\usepackage{algorithm}
\usepackage{algorithmicx}

\usepackage{hyperref}

\makeatletter

\theoremstyle{plain}
\newtheorem{thm}{\protect\theoremname}
\theoremstyle{definition}
\newtheorem{defn}[thm]{\protect\definitionname}
\ifx\proof\undefined
\newenvironment{proof}[1][\protect\proofname]{\par
\normalfont\topsep6\p@\@plus6\p@\relax
\trivlist
\itemindent\parindent
\item[\hskip\labelsep\scshape #1]\ignorespaces
}{%
\endtrivlist\@endpefalse
}
\providecommand{\proofname}{Proof}
\fi

\theoremstyle{plain}
\newtheorem{lem}[thm]{\protect\lemmaname}
\theoremstyle{plain}
\newtheorem{cor}[thm]{\protect\corollaryname}
\theoremstyle{plain}
\newtheorem{prop}[thm]{\protect\propositionname}

\providecommand{\corollaryname}{Corollary}
\providecommand{\definitionname}{Definition}
\providecommand{\propositionname}{Proposition}
\providecommand{\theoremname}{Theorem}

\usepackage{enumitem}
\setlist{nolistsep}

\makeatother

\providecommand{\definitionname}{Definition}
\providecommand{\lemmaname}{Lemma}
\providecommand{\theoremname}{Theorem}

\makeatletter

\makeatother

\usepackage{enumitem}
\setitemize[0]{leftmargin=15pt,itemindent=0pt,topsep=3pt,itemsep=1pt,partopsep=0pt,parsep=0pt}

\usepackage[justification=centering]{caption}
\usepackage{psfrag}
\usepackage{graphicx}
\usepackage{amsmath}
\usepackage{amssymb}
\usepackage{amsthm}
\usepackage{amsbsy}
\usepackage{algpseudocode}
\usepackage{longtable}
\usepackage{verbatim}
\usepackage{icomma} %intelligent comma in math
\usepackage{ifthen}
\usepackage{booktabs}

\pdfpagewidth=8.5truein
\pdfpageheight=11truein

\graphicspath{{.},{figs/}}

\providecommand{\newdef}[2]{\newtheorem{#1}{#2}}

\newdef{definition}{Definition} 
\newdef{observation}{Observation} %[section]
\newdef{example}{Example} %[section]
\providecommand{\eq}[1]{\eqref{eq:#1}}

\providecommand{\alg}[1]{Algorithm~\ref{alg:#1}}

\providecommand{\funcName}[1]{\textsc{#1}}

\begin{document} 
\title{Learning in POMDPs with Monte Carlo Tree Search\footnote{
    This is an extended version of a paper that was published at ICML'2017.}
}

\author{
    Sammie Katt\\
    Northeastern University
\and 
    Frans A. Oliehoek\\ 
    University of Liverpool
\and 
    Christopher Amato\\
    Northeastern University
}

\date{}
\maketitle

\begin{abstract} 
    The POMDP is a powerful framework for reasoning under outcome and information
uncertainty, but constructing an accurate POMDP model is difficult.
Bayes-Adaptive Partially Observable Markov Decision Processes (BA-POMDPs)
extend POMDPs to allow the model to be learned during execution. BA-POMDPs are
a Bayesian RL approach that, in principle, allows for an optimal trade-off
between exploitation and exploration. Unfortunately, BA-POMDPs are currently
impractical to solve for any non-trivial domain. In this paper, we extend the
Monte-Carlo Tree Search method POMCP to BA-POMDPs and show that the resulting
method, which we call BA-POMCP, is able to tackle problems that previous
solution methods have been unable to solve. Additionally, we introduce several
techniques that exploit the BA-POMDP structure to improve the efficiency of
BA-POMCP along with proof of their convergence.

\end{abstract} 

\section{Introduction}\label{sec:introduction}
The Partially Observable Markov Decision Processes (POMDP)~\cite{POMDP} is a
general model for sequential decision making in stochastic and partially
observable environments, which are ubiquitous in real-world problems.  A key
shortcoming of POMDP methods is the assumption that the dynamics of the
environment are known a priori. In real-world applications, however, it may be
impossible to obtain a complete and accurate description of the system.
Instead, we may have uncertain prior knowledge about the model. When lacking a
model, a prior can be incorporated into the POMDP problem in a principled way,
as demonstrated by the \textit{Bayes-Adaptive POMDP} framework~\cite{BAPOMDP}. 

The BA-POMDP framework provides a Bayesian approach to decision making by
maintaining a probability distribution over possible models as the agent acts
in an online reinforcement learning setting~\cite{duff2002optimal,
wyatt2001exploration}. This method casts the Bayesian reinforcement learning
problem into a POMDP planning problem where the hidden model of the environment
is part of the state space.  Unfortunately, this planning problem becomes very
large, with a countably infinite state space over all possible models, and as
such, current solution methods are not scalable or perform
poorly~\cite{BAPOMDP}. 

Online and sample-based planning has shown promising performance on non-trivial
POMDP problems~\cite{ross2008online}. Online methods reduce the complexity by
considering the relevant (i.e., reachable) states only, and sample-based
approaches tackle the complexity issues through approximations in the form of
simulated interactions with the environment.  Here we modify one of those
methods, \textit{Partial Observable Monte-Carlo Planning} (POMCP) \cite{POMCP}
and extend it to the \textit{Bayes-Adaptive} case, leading to a novel approach:
BA-POMCP. 

In particular, we improve the sampling approach by exploiting the structure of
the BA-POMDP resulting in \emph{root sampling} and \emph{expected models}
methods. We also present an approach for more efficient model representation,
which we call \emph{linking states}. Lastly, we prove the correctness of our
improvements, showing that they converge to the true BA-POMDP solution. As a
result, we present methods that significantly improve the scalability of
learning in BA-POMDPs, making them practical for larger problems.

\section{Background}\label{sec:background}
First, we discuss POMDPs and BA-POMDPs in respectively
Section~\ref{ssec:frameworks} and \ref{ssec:bapomdp}.

\subsection{POMDPs}\label{ssec:frameworks}
Formally, a POMDP is described by a tuple ($S$, $A$, $Z$, $D$, $R$, $\gamma$,
$h$), where
        $S$ is the set of states of the environment; $A$ is the set of actions;
        $Z$ is the set of observations; $D$ is the `dynamics function' that
        describes the dynamics of the system in the form of transition
        probabilities $D(s',z|s,a)$;\footnote
        {
            This formulation allows for easier notation and generalizes the
            typical formulation with separate transition $T$ and observation
            functions $O$: $D = \langle T, O \rangle$. In our experiments, we
            do employ this typical factorization.
        }
        $R$ is the immediate reward function $R(s, a)$ that describes the
        reward of selecting $a$ in $s$; $\gamma \in [0,1)$ is the discount
        factor; and $h$ is the horizon of an episode in the system. 

The goal of the agent in a POMDP is to maximize the expected cumulative
(discounted) reward, also called the expected return. The agent has no direct
access to the system's state, so it can only rely on the
\emph{action-observation history} $h_t=\langle a_0,z_1,\ldots,a_{t-1},z_t
\rangle$ up to the current step $t$. It can use this history to maintain a
probability distribution over the state, also called a belief, $b(s)$.  A
solution to a POMDP is then a mapping from a belief $b$ to an action $a$, which
is called a $policy$ $\pi$. Solution methods aim to find an optimal policy, a
mapping from a belief to an action with the highest possible expected return. 

The agent maintains its belief during execution through \textit{belief
updates}. A belief update calculates the posterior probability of the state
$s'$ given the previous belief over $s$ and action-observation pair $\langle
a,z \rangle$: $b'(s) = P(s' | b(s), a, z)$. This operation is infeasible for
large spaces because it enumerates over the entire state space. A common
approximation method is to represent the belief with a (unweighted)
\textit{particle filter}~\cite{Thrun99}. A particle filter is a collection of
$K$ particles (states). Each particle represents a probability of
$\frac{1}{K}$; if a specific state $x$ occurs $n$ times in a particle filter,
then $P(x) = \frac{n}{K}$. The precision of the filter is determined by the
number of particles $K$.  To update such a belief after execution of action $a$
and observation $z$, a standard approach is to  utilize \textit{rejection
sampling}: the agent repeatedly samples a state $s$ from its belief, then
simulates the execution of $a$ on $s$ through $D$, and receives a (simulated)
new state $s_{sim}'$ and observation $z_{sim}$. $s'$ is added to the new belief
only when $z_{sim}$ equals $z$, and rejected otherwise. This process repeats
until the new belief contains $K$ particles.

\textit{Partially Observable Monte-Carlo Planning} (POMCP)~\cite{POMCP}, is a
scalable method which extends Monte Carlo tree search (MCTS) to solve POMDPs.
POMCP is one of the leading algorithms for solving general POMDPs. At each time
step, the algorithm performs online planning by incrementally building a
lookahead tree that contains $Q(h,a)$, where $h$ is the action-observation
history-path to reach that node.  It samples hidden states $s$ at the root node
(called `root sampling') and uses that state to sample a trajectory that first
traverses the lookahead tree and then performs a (random) rollout. The return
of this trajectory is used to update the statistics for all visited nodes.
These statistics include the number of times an action has been taken at a
history ($N(h,a)$) and estimated value of being in that node ($Q(h,a)$), based
on an average over the returns.

Because this lookahead tree can be very large, the search is directed to the
relevant parts by selecting the actions inside the tree that maximize the
`upper confidence bounds'~\cite{auer2002finite}: $ U(h,a) = Q(h, a) + c
\sqrt{\log(N(h) + 1) / N(h,a)}.$ Here, $N(h)$ is the number of times the
history has been visited. At the end of each simulation, the discounted
accumulated return is used to update the estimated value of all the nodes in
the tree that have been visited during that simulation.  POMCP terminates after
some criteria has been met, typically defined by a maximum number of
simulations or allocated time. The agent then picks the action with the highest
estimated value $(\max_a Q(b,a))$. POMCP can be shown to converge to an
$\epsilon$-optimal value function. Moreover, the method has demonstrated good
performance in large domains with a limited number of simulations. The
extension of POMCP that is used in this work is discussed in
Section~\ref{sec:bapomcp}.

\subsection{BA-POMDPs}\label{ssec:bapomdp}
Most research concerning POMDPs has considered the task of \emph{planning}:
given a full specification of the model, determine an optimal policy (e.g.,
\cite{POMDP,Shani2012}). However, in many real-world applications, the model is
not (perfectly) known in advance, which means that the agent has to learn about
its environment during execution. This is the task considered in
\emph{reinforcement learning} (RL)~\cite{SuttonBarto98}.

A fundamental RL problem is the difficulty of deciding whether to select
actions in order to learn a better model of the environment, or to exploit the
current knowledge about the rewards and effects of actions. In recent years,
Bayesian RL methods have become popular because they can provide a principled
solution to this exploration/exploitation
trade-off~\cite{wyatt2001exploration,duff2002optimal,engel2005reinforcement,poupart2008model,vlassis2012bayesian}.

In particular, we consider the framework of Bayes-Adaptive POMDPs
\cite{ross2007bayes,BAPOMDP}. BA-POMDPs use Dirichlet distributions to model
uncertainty over transitions and observations\footnote
    {
        \cite{ross2007bayes,BAPOMDP} follow the standard $T$ \& $O$ POMDP representations,
        but we use our combined $D$ formalism.
    }
(typically assuming the reward function is chosen by the designer and is
known). In particular, if the agent could observe both states and observations,
it could maintain a vector $\chi$ with the counts of the occurrences for all
$\langle s, a, s', z \rangle $ tuples. We write $\chi_{sa}^{s'z}$ for the
number of times that $\langle s,a \rangle$ is followed by $\langle s',z
\rangle$.

While the agent cannot observe the states and has uncertainty about the actual
count vector, this uncertainty can be represented using regular POMDP
formalisms. That is, the count vector is included as part of the hidden state
of a specific POMDP, called a BA-POMDP. Formally, a BA-POMDP is a tuple $
\langle \bar{S}, A, \bar{D}, \bar{R}, Z, \gamma, h \rangle $ with some modified
components in comparison to the POMDP.  While the observation and action space
remain unchanged, the state (space) of the BA-POMDP now includes Dirichlet
parameters: $\bar{s} = \langle s, \chi \rangle$, which we will refer to as
augmented states. The reward model remains the same, since it is assumed to be
known, $\bar{R}(\langle s',\chi'),a)=R(s,a)$. The dynamics functions,
$\bar{D}$, however, is described in terms of the counts in $\bar{s}$, and is
defined as follows 
\begin{equation}
D_\chi( s',z | s, a) \triangleq \mathbf{E} [ D( s',z | s, a) | \chi ] = 
    \frac{
            \chi_{sa}^{s'z}
        }{
        \sum_{s'z} \chi_{sa}^{s'z}
    }.
    \label{eq:DChi}
\end{equation}    
These expectations can now be used to define the transitions for the BA-POMDP.
If we let $\delta_{sa}^{s'z}$ denote a vector of the length of $\chi$
containing all zeros except for the position corresponding to $\langle s,a,s',z
\rangle$ (where it has a one), and if we let $\mathbb{I}_{a,b}$ denote the
Kronecker delta that indicates (is 1 when) $a=b$, then we can define $\bar{D}$
as $ \bar{D}( s',\chi',z | s,\chi, a) = D_\chi(s',z | s, a)
\mathbb{I}_{\chi',\chi+\delta_{sa}^{s'z}}.$ 

Remember that these counts are not observed by the agent, since that would
require observations of the state. The agent can only maintain belief over
these count vectors. Still, when interacting with the environment, \emph{the
ratio of the true---but unknown---count vectors will converge to coincide with
the true transition and observation probabilities in expectation}. It is
important to realize, however, that this convergence of count vector ratios
does not directly imply learnability by the agent: even though the ratio of the
count vectors of the true hidden state will converge, \emph{the agent's belief
over count vectors might not}.

BA-POMDPs are infinite state POMDP models and thus extremely difficult to
solve. \citet{BAPOMDP} introduced a technique to convert such models to finite
models, but these are still very large. Therefore, Ross et al.~propose a simple
lookahead planner to solve BA-POMDPs in an online manner. This approach
approximates the expected values associated with each action at the belief by
applying a lookahead search of depth $d$. This method will function as the
comparison baseline in our experiments, as no other BA-POMDP solution methods
have been proposed.

\section{BA-POMDPs via Sample-based Planning}\label{sec:bapomcp}
Powerful methods, such as POMCP \cite{POMCP}, have significantly improved the
scalability of POMDP solution methods.  At the same time the most practical
solution method for BA-POMDPs, the aforementioned lookahead algorithm, is quite
limited in dealing with larger problems. POMDP methods have rarely been applied
to BA-POMDPs \cite{Amato15AAAI}, and no systematic investigation of their
performance has been conducted. In this paper, we aim to address this void, by
extending POMCP to BA-POMDPs, in an algorithm that we refer to as
\emph{BA-POMCP}.  Moreover, we propose a number of novel adaptations to
BA-POMCP that exploit the structure of the BA-POMDP. In this section, we first
lay out the basic adaptation of POMCP to BA-POMDPs and then describe the
proposed modifications that improve its efficiency. 

\paragraph{BA-POMCP}
BA-POMCP, just like POMCP, constructs a lookahead tree through simulated
experiences (\alg{bapomcpOuterloop}).
\providecommand{\commentSymb}{//}
\begin{algorithm}[t]
    \caption{
        \funcName{BA-POMCP}$(\bar{b},num\_sims)$
    }\label{alg:bapomcpOuterloop}
    \begin{algorithmic}[1]
\small                
        \State{\commentSymb{}$\bar{b}$ is an augmented belief (e.g., particle filter)}
        \State{$h_0 \gets ()$}  \Comment{The empty history (i.e., now)}
        \For{$i \gets 1\dots{}num\_sims$}
            \State{\commentSymb{}First, we \emph{root sample} an (augmented) state:}
            \State{$ \bar{s} \gets \funcName{Sample}(\bar{b}) $} \Comment{reference to a particle}
            \State{\label{stepCopy}$ \bar{s}' \gets \funcName{Copy}(\bar{s})  $}
            \State{$ \funcName{Simulate}(\bar{s}', 0, h_0)    $}
        \EndFor
        \State{$a \gets \funcName{GreedyActionSelection}(h_0)$}
        \State{\textbf{return} $a$}
    \end{algorithmic}
\end{algorithm} 
In BA-POMDPs, however, the dynamics of the system are inaccessible during
simulations, and the belief is a probability distribution over augmented
states. BA-POMCP, as a result, must sample augmented states from the belief
$\bar{b}$, and use copies of those states ($\bar{s}=\langle s,\chi \rangle$)
for each simulation (\alg{bapomcp_simulate}). We will refer to this as
\emph{root sampling of the state} (line~\ref{stepCopy}). The copy is necessary,
as otherwise the \funcName{Step} function in \alg{bapomcp_simulate} would alter
the belief $\bar{b}$. It is also expensive, for $\chi$ grows with the state,
action and observation space, to $|S|^2 \times |A| \times |\Omega|$ parameters.
    \footnote{It is $|S|^2 \times |A| + |S| \times |A| \times |\Omega|$ when
    assuming $D$ is factored in $T$ \& $O$} 
    In practice, this operation becomes a bottleneck to the runtime of BA-POMCP
    in larger domains.

\algrenewcommand{\algorithmiccomment}[1]{\hfill\commentSymb#1}
\begin{algorithm}[t]
    \caption{
        \funcName{Simulate}$(\bar{s}, d, h)$
    }\label{alg:bapomcp_simulate}
    \begin{algorithmic}[1]
\small                
        \If{$\funcName{IsTerminal}(h)$ $||$ $d == max\_depth$}
                \State{return $0$}
            \EndIf{}
        \State{\commentSymb{}Action selection uses statistics stored at node $h$:} 
        \State{$a \gets \funcName{UCBActionSelection}(h)$} 
        \State{$R \gets R(\bar{s},a) $ }
        \State{$z \gets \funcName{Step}(\bar{s}, a)$} \Comment{\emph{modifies} $\bar{s}$ to sampled next state}
        \State{$h' \gets (h,a,z)$}
        \If{$h'  \in Tree$} 
            \State{$r \gets R + \gamma \; \funcName{Simulate}(\bar{s}, d+1, h')$} 
        \Else{}
            \State{$\funcName{ConstructNode}(h')$} \Comment{Initializes statistics}
            \State{$r \gets R + \gamma \; \funcName{Rollout}(\bar{s}, d+1, h')$} 
        \EndIf{}
        \State{\commentSymb{}Update statistics:}
        \State{$N(h,a) \gets N(h,a) + 1$} 
        \State{$Q(h,a) \gets \frac{N(h,a)-1}{N(h,a)} Q(h,a) +
        \frac{1}{N(h,a)} r$}
        \State{\textbf{return} $r$}
    \end{algorithmic}
\end{algorithm} 

To apply POMCP on BA-POMDPs, where the dynamics are unknown, we modify the
\funcName{Step} function, proposing several variants. The most straightforward
one, \funcName{BA-POMCP-Step} is employed in what we refer to as `BA-POMCP'. This
method, shown in~\alg{naive}, is similar to BA-MCP \cite{guez2012efficient}:
essentially, it samples a dynamic model $D_{sa}$ which specifies probabilities
$\Pr(s',z|s,a)$ and subsequently samples an actual next state and observation
from that distribution. Note that the underlying states and observations are
all represented simply as an index, and hence the assignment on line
\ref{naive:assignment_of_state} is not problematic. However, the cost of the
model sampling operation in line~\ref{line:model_sample} is.

\begin{algorithm}[t]
    \caption{
        \funcName{BA-POMCP-Step}$(\bar{s}=\langle s, \chi \rangle, a)$
    }\label{alg:naive}
    \begin{algorithmic}[1]
\small                
        \State{\label{line:model_sample}$D_{sa} \sim \chi_{sa}$}
        \State{$\langle s',z \rangle \sim D_{sa}$} 
        \State{\commentSymb{}In place updating of $\bar{s}=\langle s, \chi \rangle$}
        \State{\label{alg:naive:count_update}$\chi_{sa}^{s'z} \gets \chi_{sa}^{s'z}+1 $ } 
        \State{\label{naive:assignment_of_state}$s \gets s' $}  
        \State{\textbf{return} $z$}
    \end{algorithmic}
\end{algorithm} 

\paragraph{Root Sampling of the Model} BA-MCP~\cite{guez2012efficient}
addresses the fully observable BRL problem by using POMCP on an augmented state
$\bar{s}=\langle s, T \rangle$, consisting of the observable state, as well as
the hidden true transition function $T$.  Application of POMCP's root sampling
of state in this case leads to `root sampling of a transition function': Since
the true transition model $T$ does not change during the simulation, one is
sampled at the root and used during the entire simulation. In the BA-POMCP
case, root sampling of a state $\bar{s}=\langle s, \chi \rangle$ does not lead
to a same interpretation: no model, but counts are root sampled and they
\emph{do} change over time.

We use this as inspiration to introduce a similar, but clearly different,
(since this is not \emph{root sampling of state}) technique called \emph{root
sampling of the model} (which we will refer to as just `root sampling'). The
idea is simple: every time we root sample a state $\bar{s}=\langle s, \chi
\rangle \sim \bar{b}$ at the beginning of a simulation (line $5$ in~
\alg{bapomcpOuterloop}), we directly sample a $\dot{D} \sim Dir(\chi) $, which
we will refer to as the root-sampled model $\dot{D}$ and it is used for the
rest of the simulation. 

We denote this root sampling in BA-POMCP as `R-BA-POMCP'. The approach is
formalized by \funcName{R-BA-POMCP-Step} (\alg{rs}). Note that no count updates
take place (cf.~line~\ref{alg:naive:count_update} in~\alg{naive}). This
highlights an important advantage of this technique: since the counts are not
used in the remainder of the simulation, the copy of counts (as part of line
\ref{stepCopy} of~\alg{bapomcpOuterloop}) can be avoided altogether. Since this
copy operation is costly, especially in larger domains, where the number of
states, action and observations and the number of counts is large, this can
lead to significant savings. Finally, we point out that, similar to what
\citet{guez2012efficient} propose, $\dot{D}$ can be constructed lazily: the
part of the model $\dot{D}$ is only sampled when it becomes necessary. 

\begin{algorithm}[t]
    \caption{
        \funcName{R-BA-POMCP-Step
        }$(\bar{s} = \langle s, \chi \rangle, a)$
    }\label{alg:rs}
    \begin{algorithmic}[1]
\small                
        \State{\commentSymb{}Sample from the root sampled model}
        \State{$s',z \sim \dot{D}_{s,a}$}
        \State{\label{RS:assignment_of_state}$s \gets s' $}  
        \State{\textbf{return} $z$}
    \end{algorithmic}
\end{algorithm} 

The transition probabilities during R-BA-POMCP differ from those in BA-POMCP,
and it is not obvious that a policy based on R-BA-POMCP maintains the same
guarantees. We prove in Section~\ref{sec:theory} that the solution of
R-BA-POMCP in the limit converges to that of BA-POMCP.

\paragraph{Expected models during simulations}
The second, complementary, adaptation modifies the way models are sampled from
the root-sampled counts in \funcName{Step}. This version samples the
transitions from the \textit{expected} dynamics $D_\chi$ given in~\eq{DChi},
rather than from a sampled dynamics function $D \sim Dir(\chi)$.  The latter
operation is relatively costly, while constructing $D_\chi$ is very cheap.  In
fact, this operation is so cheap, that it is more efficient to (re-)calculate
it on the fly rather than to actually store $D_\chi$. This approach is shown
in~\alg{exm}.

\begin{algorithm}[t]
    \caption{
        \funcName{E-BA-POMCP-Step}$(\bar{s} = \langle s, \chi \rangle, a)$
    } \label{alg:exm}
    \begin{algorithmic}[1]
        \small
        \State{\commentSymb{}Sample from the \textit{Expected} model}
        \State{$s',z \sim D_\chi(\cdot, \cdot | s,a) $}
        \State{$ \chi_{sa}^{s'z} \gets \chi_{sa}^{s'z}+1 $ } 
        \State{$s \gets s'$}
        \State{\textbf{return} $z$}
    \end{algorithmic}
\end{algorithm} 

\paragraph{Linking States}\label{sssec:linking_states} Lastly, we propose a
specialized data structure to encode the augmented BA-POMDP states. The
structure aims to optimize for the complexity of the count-copy operation in
line \ref{stepCopy} of~\alg{bapomcpOuterloop} while allowing modifications to
$\bar{s}$. The linking state $s_l$ is a tuple of a system state, a pointer (or
\emph{link}) to an unmodifiable set of counts $\chi$ and a set of
\textit{updated} counts $\langle s, l, \delta \rangle$.  $l$ is a pointer to
some set of counts $\chi$, which remain unchanged during count updates (such as
in the \funcName{Step} function), and instead are stored in the set of updated
counts, $\delta$, as shown in~\alg{ls}. The consequence is that the linking
state copy-operation can safely perform a shallow copy of the counts $\chi$,
and must only consider $\delta$, which is assumed to be much smaller.

Linking states can be used during the (rejection-sample-based) belief update at
the beginning of each real time step. While the root-sampled augmented states
(including $\delta$ in linking states) are typically deleted at the end of each
simulation during L-BA-POMCP, each belief update potentially increases the size
of $\delta$ of each particle.  Theoretically, the number of updated counts
represented in $\delta$ increases and the size of $\delta$ may (eventually)
grow similar to the size of $\chi$.  Therefore, at some point, it is necessary
to construct a new $\chi'$ that combines $\chi$ and $\delta$ (after which
$\delta$ can be safely emptied). We define a new parameter for the maximum size
of $\delta$, $\lambda$, and condition to merge only if the size of $\delta$
exceeds $\lambda$. We noticed that, in practice, the number of merges is much
smaller than the amount of copies in BA-POMCP. We also observed in our
experiments that it is often the case that a specific (small) set of
transitions are notably more popular than others and that $\delta$ grows quite
slowly.
\begin{algorithm}[t]
    \caption{
        \funcName{L-BA-POMCP-Step}$(s_l = \langle s, l, \delta \rangle, a)$
    }\label{alg:ls}
    \begin{algorithmic}[1]
\small                
        \State{\label{alg:ls:sample}$D \sim \langle l, \delta \rangle$}
        \State{$s',z \sim D_{s,a}$}
        \State{$s \gets s'$}
        \State{$\delta_{sa}^{s'z} \gets \delta_{sa}^{s'z}+1$} 
        \State{\textbf{return} $z$}
    \end{algorithmic}
\end{algorithm}

\section{Theoretical Analysis}\label{sec:theory}

\global\long\def\cip{\overset{p}{\rightarrow}}

\global\long\def\fh{H}

\global\long\def\fhD#1{\fh_{#1}}

\global\long\def\fhDP#1#2{\fh_{#1}^{(#2)}}

\global\long\def\aoh{h}

\global\long\def\aohD#1{\aoh_{#1}}

\global\long\def\sh{STATEHISTORY}

\global\long\def\shD#1{s_{0:#1}}

\global\long\def\KD#1#2{{\mathbb{I}_{{#1},{#2}}}}

\global\long\def\DD#1#2{{\mathbb{I}_{{#1},{#2}}}}

\global\long\def\sbr{\bar{s}}

\global\long\def\sba{\check{s}}

\providecommand{\BAPOMCP}{BA-POMCP} \providecommand{\RSBAPOMCP}{RS-BA-POMCP}
Here, we analyze the proposed root sampling of the dynamics
function and expected transition techniques, and demonstrate they
converge to the solution of the BA-POMDP. These main steps of this
proof are similar to those in \cite{POMCP}. We point out however, that
the technicalities of proving the components are far more involved.

The convergence guarantees of the original POMCP method are based
on showing that, for an arbitrary rollout policy $\pi$, the \emph{POMDP
rollout distribution }(the distribution over full histories when performing
root sampling of state) is equal to the \emph{derived MDP rollout
distribution (}the distribution over full histories when sampling
in the belief MDP). Given that these are identical it is easy to see
that the statistics maintained in the search tree will converge to
the same number in expectation. As such, we will show a similar result
here for expected transitions (`expected' for short) and root sampling
of the dynamics function (`root sampling' below). 

We define $\fhD 0$ as the \emph{full} history (also including states) at the
root of simulation, $\fhD d$ as the full history of a node at depth $d$ in the
simulation tree, and $\chi(\fhD d)$ as the counts induced by $\fhD d$. We then
define the rollout distributions:
\begin{defn}\label{theory-def:The-full-history-BA-POMDP}
    The \emph{expected} \emph{full-history }\textbf{\emph{expected transition
    }}\emph{BA-POMDP rollout distribution} is the distribution over full
    histories of a BA-POMDP, when performing Monte-Carlo simulations according
    to a policy $\pi$. It is given by
    \begin{equation}
        P^{\pi}(\fhD{d+1})=D_{\chi(\fhD d)}(s_{d+1},z_{d+1}|a_{s},s_{d})\pi(a_{d}|\aohD d)P^{\pi}(\fhD d)\label{theory-eq:fh-BA-POMDP_rollout-distr}
    \end{equation}
    with $P^{\pi}(\fhD 0)=b_{0}(\left\langle s_{0},\chi_{0}\right\rangle )$ the
    belief `now' (at the root of the online planning).
\end{defn}
Note that there are two expectations in the above definition: `expected
transitions' mean that transitions for a history $\fhD d$ are \emph{sampled
}from $D_{\chi(\fhD d)}$. The other `expected' is the expectation of those
samples (and it is easy to see that this will converge to the expected
transition probabilities $D_{\chi(\fhD d)}(s_{d+1},z_{d+1}|a_{s},s_{d})$).  For
root sampling of the dynamics model, this is less straightforward, and we give
the definition in terms of the empirical distribution:
\begin{defn} \label{theory-def:The-full-history-RS-BA-POMDP}
    The \emph{empirical full-history }\textbf{\emph{root-sampling (RS)}}\emph{
        BA-POMDP rollout distribution} is the distribution over full histories
    of a BA-POMDP, when performing Monte-Carlo simulations according to a
    policy $\pi$ \textbf{in combination with root sampling} of the dynamics
    model $D$. This distribution, for a particular stage $d$, is given by \[
        \tilde{P}_{K}^{\pi}(\fhD
        d)\triangleq\frac{1}{K_{d}}\sum_{i=1}^{K_{d}}\mathbb{I}_{\left\{ \fhD
        d=\fhDP di\right\} }, \] where
\end{defn}
\begin{itemize}
    \item $K$ is the number of simulations that comprise the empirical
        distribution,
    \item $K_{d}$ is the number of simulations that reach depth $d$ (not all
        simulations might be equally long),
    \item $\fhDP di$ is the history specified by the $i$-th particle at stage
        $d$.
\end{itemize}
Now, our main theoretical result is that these distributions are the
same in the limit of the number of simulations:
\begin{thm}
\label{lem:Ph_converges-in-prob}The\emph{ }full-history RS-BA-POMDP
rollout distribution (Def.~\ref{theory-def:The-full-history-RS-BA-POMDP})
converges in probability to the quantity of Def.~\ref{theory-def:The-full-history-BA-POMDP}:
\begin{equation}
\forall_{\fhD d}\quad\tilde{P}_{K_{d}}^{\pi}(\fhD d)\cip P^{\pi}(\fhD d).\label{theory-eq:lemma_RS-rollout_converges_to_rollout}
\end{equation}
\end{thm}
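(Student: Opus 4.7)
My strategy is to prove the stronger deterministic fact that the distribution over full histories induced by a single R-BA-POMCP simulation is \emph{equal} to $P^{\pi}(\fhD d)$ for every depth $d$, and then to apply the weak law of large numbers to conclude convergence of the empirical distribution. The $K_d$ simulations reaching depth $d$ are i.i.d.\ draws from that common law (with $K_d\to\infty$ almost surely as $K\to\infty$), so the empirical frequencies $\tilde{P}_{K_d}^{\pi}(\fhD d)$ converge in probability to their mean, which under the equality is exactly $P^{\pi}(\fhD d)$.

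\textbf{The core identity.} In one R-BA-POMCP simulation the randomness splits into (i)~$\sbr_0=(s_0,\chi_0)\sim b_0$ at the root, (ii)~a single draw $\dot D\sim\mathrm{Dir}(\chi_0)$, and (iii)~the trajectory produced by alternating $\pi$ and $\dot D$. Since $\pi$ reads only $\aohD t$ and is independent of $\dot D$, the factors $\pi(a_t\mid\aohD t)$ split off cleanly, and the task reduces to proving
\[
\mathbb{E}_{\dot D\sim\mathrm{Dir}(\chi_0)}\!\left[\prod_{t=0}^{d-1}\dot D(s_{t+1},z_{t+1}\mid s_t,a_t)\right]=\prod_{t=0}^{d-1}D_{\chi(\fhD t)}(s_{t+1},z_{t+1}\mid s_t,a_t).
\]
Because $\chi_0$ parameterises independent Dirichlets, one per $(s,a)$-slice, the expectation factors over slices; within each slice the Polya urn / Dirichlet--multinomial predictive rule rewrites the product of marginals as a product of predictive ratios built from the running counts at each visit. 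Reinterleaving those factors in trajectory order yields exactly $\prod_t D_{\chi(\fhD t)}(\cdot)$, so $P_{RS}^\pi(\fhD d)=P^\pi(\fhD d)$.

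\textbf{Where the work is.} The LLN step is entirely routine; the content sits in the combinatorial bookkeeping of the Polya urn expansion. One must group the factors of the product by the $(s,a)$-pair they update, apply the conjugacy identity per slice while preserving the time order of updates within that slice, and then reinterleave the per-slice results into trajectory order. This is precisely what \citet{POMCP} avoid (there is no Dirichlet to integrate out in ordinary POMCP) and what makes the BA-POMDP technicalities, in the paper's phrase, ``far more involved.'' A clean way to execute it is by induction on $d$: assume the equality at depth $d$ and observe that, conditional on $\fhD d$, the posterior of $\dot D$ restricted to its $(s_d,a_d)$-slice is $\mathrm{Dir}\!\left(\chi(\fhD d)_{s_d a_d}\right)$, whose mean supplies exactly the extra factor $D_{\chi(\fhD d)}(s_{d+1},z_{d+1}\mid s_d,a_d)$. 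Careful handling of trajectories with repeated $(s,a)$-visits, so that the count vector $\chi(\fhD t)$ really matches the Polya urn denominator at every step, is the only delicate point.
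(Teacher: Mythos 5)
Your proposal is correct and follows essentially the same route as the paper: reduce via the weak law of large numbers to an exact identity between the root-sampled and expected-transition rollout distributions, then establish that identity by integrating out the Dirichlet slice-by-slice and recursively peeling off one posterior-predictive factor per step. The only difference is cosmetic — you invoke the Dirichlet--multinomial/Polya-urn predictive rule where the paper carries out the equivalent computation explicitly as a telescoping ratio of normalization constants $B(\chi_{sa}(\fhD 0))/B(\chi_{sa}(\fhD d))$ reduced via $\Gamma(x+1)=x\Gamma(x)$.
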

\begin{proof}
    The proof is listed in appendix A.
\end{proof}
\begin{cor}
\label{thm:convergence_identical_solution} Given suitably chosen
exploration constant (e.g., $c>\frac{Rmax}{1-\gamma}$), BA-POMCP
with root-sampling of dynamics function converges in probability to
the expected transition solution.
\begin{proof}
Since Theorem \ref{lem:Ph_converges-in-prob} guarantees the distributions
over histories are the same in the limit, they will converge to the
same values maintained in the tree.
\end{proof}
\end{cor}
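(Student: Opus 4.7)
The plan is to bridge the gap between \thm{convergence_identical_solution}'s one-line justification and an actual statement about tree-value convergence, by combining \thm{lem:Ph_converges-in-prob} with the standard UCT backup argument of Kocsis and Szepesv\'ari. First, I would formalize what ``the expected transition solution'' means at each internal node: for a history $h$ at depth $d$, the target value $V^{*}(h)$ is obtained by Bellman backups in the BA-POMDP belief MDP whose transitions are governed by $D_{\chi(\fhD d)}$. The algorithmic quantity at that node is the running average $Q(h,a)$ of the discounted returns that pass through $(h,a)$ during simulation. Hence it suffices to show that, for every history $h$ reached in the tree and every $a$, $Q(h,a) \cip Q^{*}(h,a)$, after which greedy action selection at the root converges to an optimal action in probability.

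Next I would carry out an induction from the leaves upward along the finite lookahead tree. For the base case (nodes where rollouts are invoked), the return is the undiscounted sum along a trajectory generated by the fixed rollout policy $\pi$. By \thm{lem:Ph_converges-in-prob} applied below $h$, the empirical full-history distribution $\tilde{P}^{\pi}_{K_{d}}$ converges in probability to $P^{\pi}$, and since the discounted return is a bounded continuous functional of a history (bounded by $R_{\max}/(1-\gamma)$), the continuous-mapping theorem plus Slutsky's lemma give convergence in probability of the empirical average return to the $P^{\pi}$-expected return, which is exactly the rollout value under expected transitions. For the inductive step at an internal node $h$, I would assume the child values $Q(ha z,\cdot)$ converge in probability to $Q^{*}(ha z,\cdot)$ and invoke the Kocsis--Szepesv\'ari analysis: with the exploration bonus chosen as $c>R_{\max}/(1-\gamma)$ (which upper-bounds the range of returns), every action at $h$ is selected infinitely often, $N(h,a) \to \infty$, and the UCB selection drives the per-action empirical mean towards the mean of the one-step transition followed by the child value. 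Combined with \thm{lem:Ph_converges-in-prob} at depth $d{+}1$, this yields $Q(h,a) \cip \mathbb{E}_{D_{\chi(\fhD d)}}[R(s,a)+\gamma Q^{*}(haz,\cdot)] = Q^{*}(h,a)$, closing the induction. Backing up to depth $0$ then gives greedy convergence at the root.

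The main obstacle is not the algebraic backup, which mirrors POMCP, but two subtle measure-theoretic points peculiar to root sampling. First, within a single simulation, the transitions used at different depths are \emph{not} independent: they are all produced by the same root-sampled $\dot{D}\sim \mathrm{Dir}(\chi)$. This means the per-node samples are not i.i.d.~draws from $D_{\chi}$, so I cannot apply Hoeffding to $Q(h,a)$ directly; I have to rely on the fact that, \emph{marginally across simulations}, the empirical distribution of histories still converges (which is exactly what \thm{lem:Ph_converges-in-prob} guarantees), and then pass from convergence in probability of the empirical measure to convergence of the induced Q-average via boundedness of returns. Second, the original UCT regret bounds assume drift conditions that use an i.i.d.~payoff at each arm; I need to argue that the $\cip$ convergence of payoffs inherited from the inductive hypothesis is enough to preserve the logarithmic-regret property of UCB in probability, which can be done by a standard truncation argument on the deviation of $Q(h az,\cdot)$ from $Q^{*}(h az,\cdot)$. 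Once these two points are cleared, the corollary follows by invoking the induction at the root node and applying the continuous mapping theorem to $\arg\max_{a}$.
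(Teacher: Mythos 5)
Your proposal is correct and takes essentially the same route as the paper: the crux in both is Theorem~\ref{lem:Ph_converges-in-prob} (the limiting full-history distributions under root sampling and under expected transitions coincide), combined with the standard UCT/POMCP tree-convergence machinery, which the paper compresses into a single sentence and you expand into an explicit leaf-to-root induction with the Kocsis--Szepesv\'ari analysis and the non-i.i.d.\ caveats spelled out. The only structural difference is that you prove the stronger claim of convergence to the optimal value of the expected-transition belief MDP, which the paper deliberately defers to Corollary~\ref{thm:convergence_BAPOMDP_solution}; Corollary~\ref{thm:convergence_identical_solution} itself asserts only that the two variants share the same limit.
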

Finally, we see that these are solutions for the BA-POMDP:
\begin{cor}
\label{thm:convergence_BAPOMDP_solution} BA-POMCP with expected
transitions sampling, as well as with root sampling of dynamics function
converge to an $\epsilon$-optimal value function of a BA-POMDP: $V(\left\langle s,\chi\right\rangle ,\aoh)\cip V_{\epsilon}^{*}(\left\langle s,\chi\right\rangle ,\aoh)$,
where $\epsilon=\frac{precision}{1-\gamma}$. \end{cor}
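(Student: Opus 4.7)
My plan is to prove the expected-transition case by a direct reduction to POMCP and then transfer the result to the root-sampling variant using Corollary~\ref{thm:convergence_identical_solution}. The expected-transition case is the more fundamental one, and once it is settled the root-sampling case is essentially free.

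For the expected-transition version, the central observation is that \funcName{E-BA-POMCP-Step} (Algorithm~\ref{alg:exm}) implements exactly the BA-POMDP's dynamics function. Sampling $(s',z)\sim D_\chi(\cdot,\cdot|s,a)$ and deterministically setting $\chi'=\chi+\delta_{sa}^{s'z}$ matches the definition $\bar{D}(s',\chi',z|s,\chi,a)=D_\chi(s',z|s,a)\mathbb{I}_{\chi',\chi+\delta_{sa}^{s'z}}$ line for line; since the reward is assumed known and identical by construction of the BA-POMDP, E-BA-POMCP is literally POMCP instantiated on the BA-POMDP (viewed as a POMDP whose states are the augmented states $\bar{s}=\langle s,\chi\rangle$). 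The $\epsilon$-optimality guarantee of POMCP from \cite{POMCP} therefore applies directly: under the stated exploration constant $c>R_{max}/(1-\gamma)$, the $Q$-values maintained in the search tree converge in probability to $V_\epsilon^*(\langle s,\chi\rangle,\aoh)$ with $\epsilon=\mathrm{precision}/(1-\gamma)$. For the root-sampling variant, Corollary~\ref{thm:convergence_identical_solution} already establishes convergence in probability of R-BA-POMCP to the same in-tree values as the expected-transition version, so chaining the two limits finishes the argument.

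The one point I would treat carefully is that POMCP's convergence theorem is customarily phrased for POMDPs with finite state and observation spaces, whereas the BA-POMDP has a countably infinite state space because the counts are unbounded. This is a cosmetic issue rather than a genuine obstacle: from any fixed root belief, only finitely many augmented states are reachable within any bounded simulation depth (each simulated step increments exactly one entry of $\chi$ by one), and the UCT regret bound that underlies POMCP's proof depends only on boundedness of returns (guaranteed by $R_{max}/(1-\gamma)$) and on the exploration constant, not on finiteness of the global state space. With this caveat explicitly addressed, the POMCP argument applies verbatim to E-BA-POMCP, and the root-sampling conclusion then follows immediately via Corollary~\ref{thm:convergence_identical_solution}.
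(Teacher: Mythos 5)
Your proof follows essentially the same route as the paper's: observe that the expected-transition variant is just POMCP run on the BA-POMDP itself (whose dynamics $\bar{D}$ are by definition the expected transitions $D_\chi$), invoke the POMCP $\epsilon$-optimality guarantee, and transfer the conclusion to the root-sampling variant via Corollary~\ref{thm:convergence_identical_solution}. Your explicit handling of the countably infinite augmented state space is a refinement the paper leaves implicit, but the argument is otherwise the same.
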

\vspace{-2mm}
\begin{proof}
A BA-POMDP is a POMDP, so the analysis from \citet{POMCP}
applies to the BA-POMDP, which means that the stated guarantees hold
for BA-POMCP. The BA-POMDP is stated in terms of expected transitions,
so the theoretical guarantees extend to the expected transition BA-POMCP,
which in turn via corollary~\ref{thm:convergence_identical_solution} implies
that the theoretical guarantees extend to \RSBAPOMCP.
\end{proof}
\vspace{-2mm}
Finally, we note that linking states does not affect they way that
sampling is performed at all:
\begin{prop}
Linking states does not affect convergence of BA-POMCP.
\end{prop}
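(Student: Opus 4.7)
The plan is to argue that linking states are purely a re-encoding of augmented states and that every operation performed on them is distributionally equivalent to the corresponding operation on the standard representation $\bar{s}=\langle s,\chi\rangle$. Concretely, I would set up a bijection between any linking state $s_l=\langle s,l,\delta\rangle$ and the augmented state $\bar{s}=\langle s,\chi_l+\delta\rangle$, where $\chi_l$ denotes the (unmodifiable) count vector pointed to by $l$. The proposition then reduces to checking that this bijection is preserved by each step the algorithm performs and that the resulting rollout distributions coincide with those in Def.~\ref{theory-def:The-full-history-BA-POMDP} / Def.~\ref{theory-def:The-full-history-RS-BA-POMDP}.

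First, I would verify the per-step equivalence against \alg{ls}. The sampling line $D\sim\langle l,\delta\rangle$ is by construction a draw from $Dir(\chi_l+\delta)=Dir(\chi)$, matching the model draw in \funcName{BA-POMCP-Step} / \funcName{R-BA-POMCP-Step}; and the increment $\delta_{sa}^{s'z}\gets\delta_{sa}^{s'z}+1$ produces exactly the same post-update vector $\chi_l+\delta$ that the standard update $\chi_{sa}^{s'z}\gets\chi_{sa}^{s'z}+1$ yields. Hence the transition kernel over augmented states is identical in both representations, and the same is true after the periodic merge step (triggered when $|\delta|>\lambda$), which replaces $\langle l,\delta\rangle$ with a fresh $\langle l',0\rangle$ such that $\chi_{l'}=\chi_l+\delta$, leaving the represented $\bar{s}$ unchanged.

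Next, I would address the one genuinely subtle point: the shallow copy in line~\ref{stepCopy} of \alg{bapomcpOuterloop} shares the base block $\chi_l$ across simulations. I would show that this sharing is safe because \alg{ls} writes only into the per-simulation $\delta$ and never into $\chi_l$; the only code path that mutates base counts is the explicit merge, which acts on a particle in the belief rather than on the transient simulation copy. Consequently, independent simulations never interfere through the shared pointer, so each simulation's full history is generated from the same joint distribution as in the non-linking variants.

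Finally, given that the full-history distribution produced by linking states equals that produced by the corresponding non-linking BA-POMCP variant at every stage $d$, all tree statistics $N(h,a)$ and $Q(h,a)$ are drawn from identical distributions. The convergence statements of Theorem~\ref{lem:Ph_converges-in-prob} and Corollaries~\ref{thm:convergence_identical_solution}--\ref{thm:convergence_BAPOMDP_solution} therefore carry over verbatim. I expect the only non-trivial step to be the sharing-safety argument above; the rest is a bookkeeping check that the decomposition $\chi=\chi_l+\delta$ is an invariant of the algorithm.
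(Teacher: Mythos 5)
Your proposal is correct and matches the paper's (essentially unstated) argument: the paper justifies this proposition only with the remark that linking states do not change the way sampling is performed, and your writeup is a careful elaboration of exactly that observation, via the invariant $\chi=\chi_l+\delta$ and the safety of the shallow copy. (One cosmetic point: the map from linking states to augmented states is many-to-one rather than a bijection, since different splits of $\chi$ into $\chi_l$ and $\delta$ represent the same state, but this does not affect your argument.)
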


\section{Empirical Evaluation}\label{sec:evaluation}
\paragraph{Experimental setup} In this section, we evaluate our algorithms on a
small toy problem, the well-known Tiger problem~\cite{cassandra1994acting} and
test scalability on a larger domain: the Partially Observable Sysadmin
(POSysadmin) problem. In POSysadmin, the agent acts as a system administrator
with the task of maintaining a network of $n$ computers.  Computers are either
`working' or `failing', which can be deterministically resolved by `rebooting'
the computer.  The agent does not know the state of any computer, but can
`ping' any individual computer. At each step, any of the computers can `fail'
with some probability $f$. This leads to a state space of size $2^n$, an action
space of $2n + 1$, where the agent can `ping' or `reboot' any of the computers,
or `do nothing', and an observation space of $3$ ($\{NULL, failing,
working\}$). The `ping' action has a cost of $1$ associated with it, while
rebooting a computer costs $20$ and switches the computer to `working'.
Lastly, each `failing' computer has a cost of $10$ at each time step.

We conducted an empirical evaluation with aimed for $3$ goals: The first goal
attempts to support the claims made in Section~\ref{sec:theory} and show that
the adaptations to BA-POMCP do not decrease the quality of the resulting
policies. Second, we investigate the runtime of those modifications to
demonstrate their contribution to the efficiency of BA-POMCP. The last part
contains experiments that directly compare the performance per action selection
time with the baseline approach of~\citet{BAPOMDP}.  For brevity,
Table~\ref{tab:default_parameters} describes the default parameters for the
following experiments. It will be explicitly mentioned whenever different
values are used. 

\begin{table}[t]
    \caption{Default experiment parameters}\label{tab:default_parameters}
    \centering
    \begin{tabular}{l c}
        \small
        Parameter & Value \\ 
        \toprule
        $\gamma$ & $0.95$ \\
        $horizon$ (h) & $20$ \\
        \# particles in belief & $1000$ \\
        $f$ (computer fail \%) & $0.1$ \\
        exploration const & $h * (\max(R) - \min(R))$ \\
        \# episodes & $100$ \\
        $\lambda =$ \# updated counts & $30$ \\
        \bottomrule
    \end{tabular}
\end{table}

\paragraph{BA-POMCP variants}
Section~\ref{sec:theory} proves that the solutions of the proposed
modifications (root-sampling (R-), expected models (E-) and linking states
(L-)) in the limit converge to the solution of BA-POMCP. Here, we investigate
the behaviour of these methods in practice. For the Tiger problem, the agent's
initial belief over the transition model is correct (i.e., counts that
correspond to the true probabilities with high confidence), but it provides an
uncertain belief that underestimates the reliability of the observations.
Specifically, it assigns $5$ counts to hearing the correct observation and $3$
counts to incorrect: the agent initially beliefs it will hear correctly with a
probability of $62.5\%$. The experiment is run for with $100$, $1000$ \& $1000$
simulations and all combinations of BA-POMCP adaptations. 

\begin{figure}[t!]
    \centering
    \begin{subfigure}{0.5\textwidth}
        \centering
        \includegraphics[width=1\textwidth]{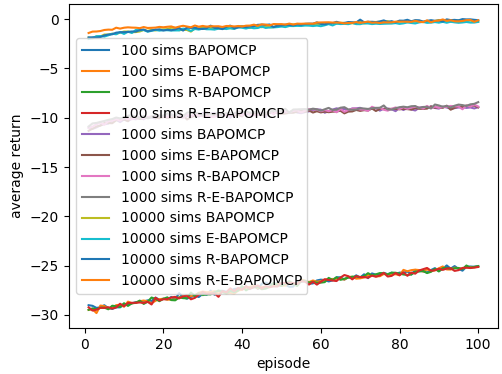}
        \caption{The average discounted return of BA-POMCP over $100$ episodes on the Tiger
        problem for $100$, $1000$ \& $10000$ simulations}
        \label{fig:tiger_equivalence}
    \end{subfigure}%
    ~
    \begin{subfigure}{0.5\textwidth}
        \centering
        \includegraphics[width=1\textwidth]{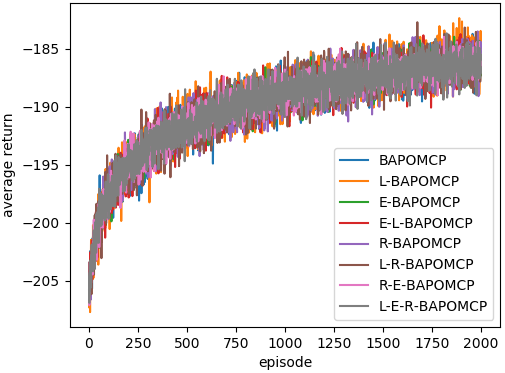}
        \caption{The average discounted of $100$ simulations of BA-POMCP per time
        episodes on the Sysadmin problem} \label{fig:sysadmin_equivalence}
    \end{subfigure}
\end{figure}

Figure~\ref{fig:tiger_equivalence} plots the average return over $10000$ runs
for a learning period of $100$ episodes for Tiger. The key observation here is
two-fold.  First, all methods improve over time through refining their
knowledge about $D$. Second, there are three distinct clusters of lines, each
grouped by the number of simulations. This shows that all $3$ variants
(R/L/E-BA-POMCP) lead to the same results.

We repeat this investigation with the ($3$-computer) POSysadmin problems, where
we allow $100$ simulations per time step. In this configuration, the network
was fully connected with a failure probability $f = 0.1$.  The (deterministic)
observation function is assumed known a priori, but the prior over the
transition function is noisy as follows: for each count $c$, we take the true
probability of that transition (called $p$) and (randomly) either subtract or
add $.15$. Note that we do not allow transitions with non-zero probability to
fall below $0$ by setting those counts to $0.001$. Each Dirichlet distribution
is then normalized the counts to sum to $20$. With $3$ computers, this results
in $|S| \times |A| = 8 \times 7 = 56$ noisy Dirichlet distributions of $|S| =
8$ parameters.

Figure~\ref{fig:sysadmin_equivalence} shows how each method is able to increase
its performance over time for POSysadmin. Again, the proposed modifications do
not seem to alter the solution quality for a specific number of simulations.

\paragraph{BA-POMCP scalability} 
While the previous experiments indicate that the three adaptations produce
equally good policies, they do not support any of the efficiency claims made in
Section~\ref{sec:bapomcp}. Here, we compare the scalability of BA-POMCP on the
POSysadmin problem. The proposed BA-POMCP variants are repeatedly run for $100$
episodes on instances of POSysadmin of increasing network size ($3$ to $10$
computers), and we measure the average action selection time required for
$1000$ simulations. Note that the experiments are capped to allow up to $5$
seconds per action selection, demonstrating the problem size that a specific
method can perform $1000$ simulations in under $5$ seconds.

\begin{figure}[t]
    \centering
        \includegraphics[width=0.475\textwidth]{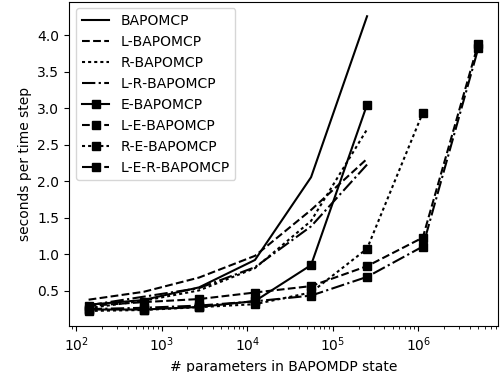}
        \caption{The average amount of seconds required for BA-POMCP given the 
        the log of the amount of parameters (size) in the POSysadmin problem}
        \label{fig:sysadmin_time_vs_parameters}
\end{figure}

Figure~\ref{fig:sysadmin_time_vs_parameters} shows that BA-POMCP takes less
than $0.5$ seconds to perform $1000$ simulations on an augmented state with
approximately $150$ parameters ($3$ computers), but is quickly unable to solve
larger problems, as it requires more than $4$ seconds to plan for a BA-POMDP with
$200000$ counts. BA-POMCP versions with a single adaptation are able to solve
the same problems twice as fast, while combinations are able to solve much
larger problems with up to $5$ million parameters ($10$ computers). This
implies not only that each individual adaptation is able to speed up BA-POMCP,
but also that they complement one another.

\paragraph{Performance}
The previous experiments first show that the adaptations do not decrease the
policy quality of BA-POMCP and second that the modified BA-POMCP methods
improve scalability. Here we put those thoughts together and directly consider
the performance relative to the action selection time. In these experiments we
take the average return over multiple repeats of $100$ episodes and plot them
according to the time required to reach such performance. Here BA-POMCP is also
directly compared to the baseline lookahead planner by~\citet{BAPOMDP}.

First, we apply lookahead with depth $1 \& 2$ on the Tiger problem under the
same circumstance as the first experiment for increasing number of particles
($25$, $50$, $100$, $200$ \& $500$), which determines the runtime. The
resulting average episode return is plotted against the action selection time
in Figure~\ref{fig:tiger_aggregated_all}.

\begin{figure}[t] 
    \centering
    \begin{subfigure}{0.5\textwidth}
        \centering
        \includegraphics[width=1\textwidth]{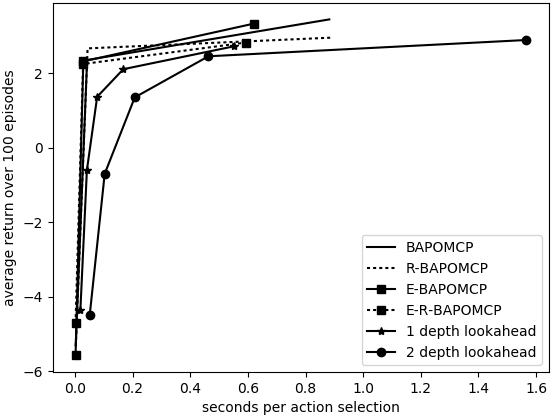}
        \caption{The average return over $100$ episodes per
        action selection time of on the Tiger problem}
        \label{fig:tiger_aggregated_all} 
    \end{subfigure}%
    ~
    \begin{subfigure}{0.5\textwidth}
        \centering
        \includegraphics[width=1\textwidth]{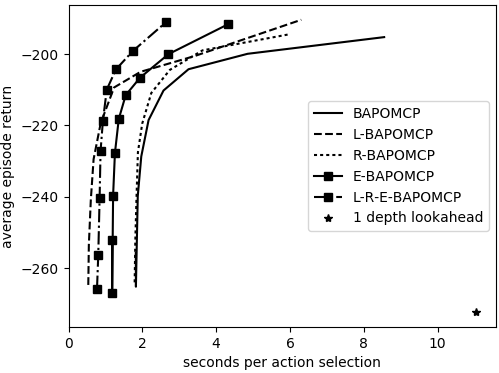}
        \caption{The average return over $100$ episodes per
        action selection time of BA-POMCP on the POSysadmin problem}
        \label{fig:6comps_aggregated} 
    \end{subfigure}
\end{figure}

The results show that most methods reach near optimal performance after $0.5$
seconds action selection time. R-BA-POMCP and E-R-BA-POMCP perform worse than
their counterparts BA-POMCP and E-BAPOMCP, which suggests that root sampling of
the dynamics actually \textit{slows down} BA-POMCP slightly. This phenomenon is
due to the fact that the Tiger problem is so small, that the overhead of
copying the augmented state and re-sampling of dynamics (during \funcName{step}
function) that root sampling avoids is negligible and does overcome the
additional complexity of root sampling.  Also note that, even though the Tiger
problem is so trivial that a lookahead of depth $1$ suffices to solve the POMDP
problem optimally, BA-POMCP still consistently outperforms this baseline.

The last experiment shows BA-POMCP and lookahead on the POSysadmin domain with
$6$ computers (which contains $55744$ counts) with a failure rate of $0.05$.
The agent was provided with an accurate belief $\chi$.\footnote 
    {
        We do not use the same prior as in the first BA-POMCP variants
        experiments since this gives uninformative results due to the fact that
        solution methods convergence to the optimal policy with respect to the
        (noisy) belief, which is different from the one with respect to the
        true model.
    } 
The results are shown in Figure~\ref{fig:6comps_aggregated}.

We were unable to get lookahead search to solve this problem: the single
instance which returned results in a reasonable amount of time (the single dot
in the lower right corner) was with a lookahead depth of $1$ (which is
insufficient for this domain) with just $50$ particles. BA-POMCP, however, was
able to perform up to $4096$ simulations within $5$ seconds and reach an
average return of approximately $-198$, utilizing a belief of $1000$ particles.
The best performing method, L-R-E-BA-POMCP requires less than $2$ seconds for
similar results, and is able to reach approximately $-190$ in less than 3
seconds.  Finally, we see that each of the individual modifications outperform
the original BA-POMCP, where Expected models seems to be the biggest
contributor.

\section{Conclusion}\label{sec:discussion}
This paper provides a scalable framework for learning in Bayes-Adaptive POMDPs.
BA-POMDPs give a principled way of balancing exploration and exploiting in RL
for POMDPs, but previous solution methods have not scaled to non-trivial
domains.  We extended the Monte Carlo Tree Search method POMCP  to BA-POMDPs
and described three modifications---Root Sampling, Linking States and Expected
Dynamics models--- to take advantage of BA-POMDP structure. We proved
convergence of the techniques and demonstrated that our methods can generate
high-quality solutions on significantly larger problems than previous methods
in the literature.

\section*{Acknowledgements}
Research supported by NSF grant \#1664923 and NWO Innovational Research Incentives Scheme Veni \#639.021.336.

\newpage
\appendix
\section{Proof of Theorem 3}

\providecommand{\BAPOMCP}{BA-POMCP} \providecommand{\RSBAPOMCP}{RS-BA-POMCP}

While \RSBAPOMCP{} is potentially more efficient, it is not directly clear
whether it still converges to an $\epsilon$-optimal value function.
Here we show that the method is sound by showing that, when using
root sampling of the model, the distribution over \emph{full histories}
(including states, actions and observations) will converge in probability
to the same distribution when not using this additional root sampling
step.

\subsection*{Notation}

We will give an concise itemized description of the used notation.

\paragraph{Action-observation histories.}
\begin{itemize}
\item $\aohD d$ is an action-observation history at depth~$d$ of a simulation.
\item $\aohD d=\left(a_{0},z_{1},\dots,a_{d-1},z_{d}\right)$.
\end{itemize}

\paragraph{`Full' histories.}

In addition to actions and observations, full histories also include
the states.
\begin{itemize}
\item $H_{0}$ is the (unknown) full\emph{ }history (of \emph{real }experience)
at the root of the simulation: i.e., if there have been $k$ steps
of `real' experience $H_{0}=(s_{-k},a_{-k},s_{-k+1},z_{-k-1},\dots,a_{-1},s_{0},z_{0})$.
\item $\fhD d$ is a\emph{ }full\emph{ }history (of \emph{simulated }experience)
at depth $d$ in the lookahead tree: $\fhD d=\left(H_{0},a_{0},s_{1},z_{1},a_{1},s_{2},z_{2},\dots,a_{d-1},s_{d},z_{d}\right)=\left(H_{d-1},a_{d-1},s_{d},z_{d}\right)=\left\langle H_{0},\shD d,\aohD d\right\rangle $.
\item $\fhDP di$ is the full history at depth $d$ corresponding to simulation
$i$.
\item In our proof, we will also need to indicate if a particular full history
$H_{d}$ is consistent with a full history at the root of simulation:
\[
\text{Cons}(H_{0},H_{d})=\begin{cases}
1 & \text{if \ensuremath{H_{d}} is consistent with the full history at the root \ensuremath{H_{0}} , }\\
0 & \text{otherwise}.
\end{cases}
\]

\end{itemize}

\paragraph{Dynamics Function.}

We fold transition and observations function into one:
\begin{itemize}
\item $D$ denotes the dynamics model.
\item $D_{s_{t-1}a_{t-1}}^{s_{t}z_{t}}=D_{sa}^{s'z}=D_{s_{t-1},a_{t-1}}(s_{t},z_{t})=D(s_{t},z_{t}|s_{t-1},a_{t-1})=\Pr(s_{t},z_{t}|s_{t-1},a_{t-1})$.
\item $D_{sa}$ denotes the vector: $\left\langle D_{sa}^{s^{1}z^{1}},\dots,D_{sa}^{s^{\left|\mathcal{S}\right|}z^{\left|\mathcal{Z}\right|}}\right\rangle $.
\end{itemize}

\paragraph{Counts.}
\begin{itemize}
\item $\chi_{sa}^{s'z}$ denotes how often $\left\langle s',z\right\rangle $
occurred after $\left\langle s,a\right\rangle $.
\item $\chi_{sa}$ is the vector of counts for $\left\langle s,a\right\rangle $.
\item $\chi=\left\langle \chi_{s^{1}a^{1}},\dots,\chi_{s^{\left|\mathcal{S}\right|}a^{\left|\mathcal{A}\right|}}\right\rangle $
is the total collection of all such count vectors.
\item $\chi(\fhD d)$ denotes the vector of counts at simulated full history
$\fhD d$.
\item If $\chi_{0}=\chi(\fhD 0)$ is the count vector at the root of simulation,
we have that $\chi(\fhD d)=\chi_{0}+\Delta(\fhD d)$, with $\Delta(\fhD d)$
the vector of counts of all $\left(s,a,s',z\right)$ quadruples occurring
in $\fhD d$ since the root of simulation (after $\fhD 0$).
\end{itemize}

\paragraph{Dirichlet distributions.}
\begin{itemize}
\item Let $x=\left\langle x_{1}\dots x_{K}\right\rangle \in\Delta^{K}$
and $\alpha=\left\langle \alpha_{1}\dots\alpha_{K}\right\rangle $
be a count vector, then we write $Dir(x|\alpha)=\Pr(x;\alpha)=B(\mbox{\ensuremath{\alpha}})\prod_{i=1}^{K}x_{i}^{\alpha_{i}-1}$
, with $B(\alpha)=\frac{\Gamma(\sum_{i}\alpha_{i})}{\prod_{i}\Gamma(\alpha_{i})}$
the Dirichlet normalization constant, with $\Gamma$ the gamma function.
\item So, in translated in terms of dynamics function and counts, we have:

\begin{itemize}
\item for a particular $s,a$: $Dir(D_{sa}|\chi_{sa})=\Pr(D_{sa};\chi_{sa})=B(\chi_{sa})\prod_{\left\langle s',z\right\rangle \in\mathcal{S\times}\mathcal{Z}}\left(D_{sa}^{s'z}\right)^{\chi_{sa}^{s'z}-1}$.
\item we will also abuse notation and write $Dir(D|\chi)=\prod_{\left\langle s,a\right\rangle }Dir(D_{sa}|\chi_{sa})$.
\end{itemize}
\end{itemize}

\paragraph{Var.}
\begin{itemize}
\item $\dot{x}$ denotes a root sampled quantity $x$.
\item $\mathbb{I}_{\left\{ condition\right\} }$ is the indicator function
which is 1 iff condition is true and 0 otherwise.
\end{itemize}

\subsection*{Definitions}
\begin{defn} \label{def:The-full-history-BA-POMDP}
    The \emph{expected} \emph{full-history }\textbf{\emph{expected transition
    }}\emph{BA-POMDP rollout distribution} is the distribution over full
    histories of a BA-POMDP, when performing Monte-Carlo simulations according
    to a policy $\pi$. It is given by
    \begin{equation}
        P^{\pi}(\fhD{d+1})=D_{\chi(\fhD d)}(s_{d+1},z_{d+1}|a_{s},s_{d})\pi(a_{d}|\aohD d)P^{\pi}(\fhD d)\label{eq:fh-BA-POMDP_rollout-distr}
    \end{equation}
    with $P^{\pi}(\fhD 0)=b_{0}(\left\langle s_{0},\chi_{0}\right\rangle )$ the
    belief `now' (at the root of the online planning).
\end{defn}
~
\begin{defn} \label{def:The-full-history-RS-BA-POMDP}
    The \emph{empirical full-history }\textbf{\emph{root-sampling (RS)}}\emph{
        BA-POMDP rollout distribution} is the distribution over full histories
    of a BA-POMDP, when performing Monte-Carlo simulations according to a
    policy $\pi$ \textbf{in combination with root sampling} of the dynamics
    model $D$. This distribution, for a particular stage $d$, is given by \[
        \tilde{P}_{K}^{\pi}(\fhD
        d)\triangleq\frac{1}{K_{d}}\sum_{i=1}^{K_{d}}\mathbb{I}_{\left\{ \fhD
        d=\fhDP di\right\} }, \] where
\end{defn}
\begin{itemize}
\item $K$ is the number of simulations that comprise the empirical distribution.
\item $K_{d}$ is the number of simulations that reach depth $d$ (not all
simulations might be equally long).
\item $\fhDP di$ is the history specified by the $i$-th particle at stage
$d$.
\end{itemize}

\paragraph{Remark:}

\label{rem:fixed_H0}throughout this proof we assume that there is
only 1 initial count vector at the root. Or put better: we assume
that there is one unique $H_{0}$ at which all simulations start.
However, for `real' steps $t>0$ we could be in different $H_{t}^{real}$
all corresponding to the same observed real history $h_{t}^{real}$.
In this case, root sampling from the belief can be thought of root
sampling the initial full history $H_{0}\sim b(H_{t}^{real})$. As
such, our proof shows convergence in probability of
\[
\forall_{\fhD 0}\forall_{\fhD d}\quad\tilde{P}_{K_{d}}^{\pi}(\fhD d|\fhD 0)\cip P^{\pi}(\fhD d|\fhD 0).
\]
for each such sampled $H_{0}$. It is clear that that directly implies
that
\[
\forall_{\fhD d}\quad\tilde{P}_{K_{d}}^{\pi}(\fhD d)=\mathbf{E}_{H_{0}}\left[\tilde{P}_{K_{d}}^{\pi}(\fhD d|\fhD 0)\right]\cip\mathbf{E}_{H_{0}}\left[P^{\pi}(\fhD d|\fhD 0)\right]=P^{\pi}(\fhD d).
\]
In the below, we omit the explicit conditioning on $H_{0}$.

\subsection*{Proof of Main Theorem}

The proof depends on a lemma that follows below.
\begin{thm}
\label{thm:P(h)_converges-in-prob}The\emph{ }full-history RS-BA-POMDP
rollout distribution (Def.~\ref{def:The-full-history-RS-BA-POMDP})
converges in probability to full-history BA-POMDP rollout distribution~(Def.~\ref{def:The-full-history-BA-POMDP}):
\begin{equation}
\forall_{\fhD d}\quad\tilde{P}_{K_{d}}^{\pi}(\fhD d)\cip P^{\pi}(\fhD d).\label{eq:lemma_RS-rollout_converges_to_rollout}
\end{equation}
\end{thm}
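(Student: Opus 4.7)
The plan is to decompose the statement into (a) a per-simulation expectation identity, and (b) a weak-law-of-large-numbers argument. Let $p_{RS}(\fhD d)$ denote the probability, taken jointly over the root-sampled dynamics $\dot D \sim Dir(\chi_0)$ and the resulting simulated trajectory under policy $\pi$, that a single R-BA-POMCP simulation produces the particular full history $\fhD d$. The core technical step is to show the identity
\begin{equation*}
p_{RS}(\fhD d) \;=\; P^{\pi}(\fhD d),
\end{equation*}
i.e.\ the expected empirical mass that one root-sampled simulation places on $\fhD d$ coincides with the expected-transition rollout probability of Definition~\ref{def:The-full-history-BA-POMDP}. Given this, each indicator $\mathbb{I}_{\{\fhD d=\fhDP d i\}}$ is Bernoulli with mean $P^\pi(\fhD d)$, and convergence in probability follows from the weak law of large numbers applied to the $K_d$ independent simulations.

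To establish the identity I would marginalize the root-sampled model. Using the factorization $Dir(D|\chi_0)=\prod_{s,a}Dir(D_{sa}|\chi_{sa,0})$ and grouping the trajectory contributions per $\langle s,a\rangle$ pair,
\begin{equation*}
p_{RS}(\fhD d) \;=\; P^\pi(\fhD 0)\prod_{t=0}^{d-1}\pi(a_t|\aohD t)\;\prod_{s,a}\int Dir(D_{sa}|\chi_{sa,0})\prod_{s',z}\bigl(D_{sa}^{s'z}\bigr)^{n_{sa}^{s'z}(\fhD d)}\,dD_{sa},
\end{equation*}
where $n_{sa}^{s'z}(\fhD d)$ counts occurrences of $(s,a,s',z)$ in $\fhD d$. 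The standard Dirichlet integral evaluates each inner factor to $B(\chi_{sa,0})/B(\chi_{sa,0}+n_{sa})$; expanding via $\Gamma(x+k)/\Gamma(x)=x(x+1)\cdots(x+k-1)$ and reordering the factors along the chronological order of visits yields a telescoping product that exactly matches $\prod_{t=0}^{d-1} D_{\chi(\fhD t)}(s_{t+1},z_{t+1}|s_t,a_t)$. Substituting back reproduces the recursion defining $P^\pi(\fhD d)$. This is essentially Polya-urn / Dirichlet--multinomial conjugacy, but with the gamma-ratio expansion reordered along the simulated trajectory rather than per $\langle s,a\rangle$ group.

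For the second step, fix $\fhD d$ and let $X_i\triangleq\mathbb{I}_{\{\fhD d=\fhDP d i\}}$. Across simulations $i$ the $X_i$ are i.i.d.\ Bernoulli, since every simulation independently draws $\dot D$ and its trajectory, and $\mathbf{E}[X_i]=p_{RS}(\fhD d)=P^\pi(\fhD d)$ by the identity above. The weak law of large numbers then gives $\frac{1}{K_d}\sum_{i=1}^{K_d}X_i \cip P^\pi(\fhD d)$, which is exactly the claim. If simulations can terminate before depth $d$, one works instead with $\frac{1}{K}\sum_{i=1}^{K}X_i$ and uses that $K_d/K$ converges in probability to $\Pr(\text{reach depth }d)$ by the same LLN; this subtlety mirrors the argument in \cite{POMCP}.

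The main obstacle is the bookkeeping in the Polya-urn step: although conceptually a single identity, proving it rigorously requires aligning the per-$\langle s,a\rangle$ gamma expansion with the \emph{temporal} order of visits so that the normalizer $\sum_{s',z}\chi(\fhD t)_{sa}^{s'z}$ appearing in the expected-transition recursion lines up with the correct gamma ratio at step $t$. A secondary subtlety is the treatment of $\pi$: inside POMCP the action selection depends on evolving tree statistics, so for the root-sampling argument to apply cleanly one should, as in \cite{POMCP}, fix $\pi$ on the rollout portion, and couple this statement with an inductive argument on tree growth (which is exactly what Corollary~\ref{thm:convergence_BAPOMDP_solution} ultimately leverages).
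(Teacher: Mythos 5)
Your proposal is correct and follows essentially the same route as the paper: marginalize the root-sampled model via the Dirichlet integral to get the ratio of normalizers $B(\chi_{sa}(\fhD 0))/B(\chi_{sa}(\fhD d))$, reduce that ratio to the sequential product of expected transitions via gamma-function identities (the paper does this one step at a time with $\Gamma(x+1)=x\Gamma(x)$ and induction, where you expand the rising factorial and reorder chronologically --- the same computation), and finish with the weak law of large numbers. The only presentational difference is that the paper isolates the Dirichlet-integral computation in a separate lemma and handles the Polya-urn bookkeeping by a recursion on $d$ rather than a direct telescoping product.
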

\begin{proof}
For ease of notation we prove this for stage $d+1$. Note that a history
$\fhD{d+1}=\left(\fhD d,a_{d},s_{d+1},z_{d+1}\right)$, \emph{only}
differs from $\fhD d$ in that it has one extra transition for the
$\left(s_{d},a_{d},s_{d+1},z_{d+1}\right)$ quadruple, implying that
$\chi(H_{d+1})$ only differs from $\chi(\fhD d)$ in the counts $\chi_{s_{d}a_{d}}$
for $s_{d}a_{d}$. Therefore, the expression for $\tilde{P}_{K_{d}}^{\pi}(\fhD d)$
derived in Lemma~\ref{lem:fh-RS-BA-POMDP_convergence} below (cf.
equation (\ref{eq:RS-P(fhD)})) can be written in recursive form as

\begin{eqnarray*}
\tilde{P}^{\pi}(\fhD{d+1}) & = & \text{Cons}(H_{0},H_{d})\prod_{t=0}^{d}\pi(a_{t}|h_{t})\prod_{\left\langle s,a\right\rangle }\frac{B(\chi_{sa}(\fhD 0))}{B(\chi_{sa}(\fhD{d+1}))}\\
& = & \text{Cons}(H_{0},H_{d})\prod_{t=0}^{d-1}\pi(a_{t}|h_{t})\pi(a_{d}|h_{d})\prod_{\left\langle s,a\right\rangle }\frac{B(\chi_{sa}(\fhD 0))}{B(\chi_{sa}(\fhD d))}\frac{B(\chi_{sa}(\fhD d))}{B(\chi_{sa}(\fhD{d+1}))}\\
& = & \text{Cons}(H_{0},H_{d})\prod_{t=0}^{d-1}\pi(a_{t}|h_{t})\pi(a_{d}|h_{d})\left[\prod_{\left\langle s,a\right\rangle }\frac{B(\chi_{sa}(\fhD 0))}{B(\chi_{sa}(\fhD d))}\right]\left[\prod_{\left\langle s,a\right\rangle }\frac{B(\chi_{sa}(\fhD d))}{B(\chi_{sa}(\fhD{d+1}))}\right]\\
& = & \left[\text{Cons}(H_{0},H_{d})\prod_{t=0}^{d-1}\pi(a_{t}|h_{t})\prod_{\left\langle s,a\right\rangle }\frac{B(\chi_{sa}(\fhD 0))}{B(\chi_{sa}(\fhD d))}\right]\pi(a_{d}|h_{d})\frac{B(\chi_{s_{d}a_{d}}(\fhD d))}{B(\chi_{s_{d}a_{d}}(\fhD{d+1}))}\\
& = & \tilde{P}^{\pi}(\fhD d)\pi(a_{d}|\aohD d)\frac{B(\chi_{s_{d}a_{d}}(\fhD d))}{B(\chi_{s_{d}a_{d}}(\fhD{d+1}))}
\end{eqnarray*}
with base case $\tilde{P}^{\pi}(\fhD 0)=1$, and
\begin{equation}
\frac{B(\chi_{s_{d}a_{d}}(\fhD d))}{B(\chi_{s_{d}a_{d}}(\fhD{d+1}))}=\frac{B(\chi_{s_{d}a_{d}}(\fhD 0))}{B(\chi_{s_{d}a_{d}}(\fhD{d+1}))}\cdot\frac{B(\chi_{s_{d}a_{d}}(\fhD d))}{B(\chi_{s_{d}a_{d}}(\fhD 0))}=\frac{B(\chi_{s_{d}a_{d}}(\fhD 0))/B(\chi_{s_{d}a_{d}}(\fhD{d+1}))}{B(\chi_{s_{d}a_{d}}(\fhD 0))/B(\chi_{s_{d}a_{d}}(\fhD d))}\label{eq:B_ratio}
\end{equation}
the result of dividing out the contribution of the old counts for
$s_{d}a_{d}$ and multiplying in the new contribution. Now, we investigate
these terms more closely.

Again remember that the sole difference between $\fhD{d+1}=\left(\fhD d,a_{d},s_{d+1},z_{d+1}\right)$ 
and $\fhD d$ is that it has one extra transition for the $\left(s_{d},a_{d},s_{d+1},z_{d+1}\right)$
quadruple. Let us write $T=\sum_{\left(s',z\right)}\chi_{s_{d}a_{d}}^{s'z}(\fhD d)$
for the total of the counts for $s_{d},a_{d}$ and $N=\chi_{s_{d}a_{d}}^{s_{d+1}z_{d+1}}(\fhD d)$
for the number of counts for that such a transition was to $\left(s_{d+1}z_{d+1}\right)$.
Because $\fhD{d+1}$ only has 1 extra transition, we also know that
for this history, the total counts is one higher: $\sum_{\left(s',z\right)}\chi_{s_{d}a_{d}}^{s'z}(\fhD{d+1})=T+1$
and since that transition was to $(s_{d+1}z_{d+1})$ the counts $\chi_{s_{d}a_{d}}^{s_{d+1}z_{d+1}}(\fhD{d+1})=N+1$.
Now let us expand the term from (\ref{eq:B_ratio}):
\begin{eqnarray*}
\frac{B(\chi_{s_{d}a_{d}}(\fhD d))}{B(\chi_{s_{d}a_{d}}(\fhD{d+1}))} & = & \frac{\Gamma(T)/\prod_{s'z}\Gamma(\chi_{s_{d}a_{d}}^{s'z}(\fhD d))}{\Gamma(T+1)/\prod_{s'z}\Gamma(\chi_{s_{d}a_{d}}^{s'z}(\fhD{d+1}))}\\
& = & \frac{\Gamma(T)}{\Gamma(T+1)}\frac{\prod_{s'z}\Gamma(\chi_{s_{d}a_{d}}^{s'z}(\fhD{d+1}))}{\prod_{s'z}\Gamma(\chi_{s_{d}a_{d}}^{s'}(\fhD d))}\\
& = & \frac{\Gamma(T)}{\Gamma(T+1)}\frac{\Gamma(\chi_{s_{d}a_{d}}^{s_{d+1}z_{d+1}}(\fhD{d+1}))\prod_{s'z\neq\left(s_{d+1}z_{d+1}\right)}\Gamma(\chi_{s_{d}a_{d}}^{s'z}(\fhD{d+1}))}{\Gamma(\chi_{s_{d}a_{d}}^{s_{d+1}z_{d+1}}(\fhD d))\prod_{s'z\neq\left(s_{d+1}z_{d+1}\right)}\Gamma(\chi_{s_{d}a_{d}}^{s'z}(\fhD d))}\\
& = & \frac{\Gamma(T)}{\Gamma(T+1)}\frac{\Gamma(\chi_{s_{d}a_{d}}^{s_{d+1}z_{d+1}}(\fhD{d+1}))}{\Gamma(\chi_{s_{d}a_{d}}^{s_{d+1}z_{d+1}}(\fhD d))}=\frac{\Gamma(T)}{\Gamma(T+1)}\frac{\Gamma(N+1)}{\Gamma(N)}
\end{eqnarray*}
Now, the gamma function has the property that $\Gamma(x+1)=x\Gamma(x)$
\citep{DeGroot04}, which means that we get
\[
=\frac{\Gamma(T)}{T\Gamma(T)}\frac{N\Gamma(N)}{\Gamma(N)}=\frac{N}{T}.
\]
Therefore we get
\[
\frac{B(\chi_{s_{d}a_{d}}(\fhD d))}{B(\chi_{s_{d}a_{d}}(\fhD{d+1}))}=\frac{\chi_{s_{d}a_{d}}^{s_{d+1}z_{d+1}}(\fhD d)}{\sum_{\left(s',z\right)}\chi_{s_{d}a_{d}}^{s'z}(\fhD d)}
\]
and thus
\begin{equation}
\tilde{P}^{\pi}(\fhD{d+1})=\tilde{P}^{\pi}(\fhD d)\pi(a_{d}|\aohD d)\frac{\chi_{s_{d}a_{d}}^{s_{d+1}z_{d+1}}(\fhD d)}{\sum_{\left(s',z\right)}\chi_{s_{d}a_{d}}^{s'z}(\fhD d)}.\label{eq:recursive-RS-P(H)__induction-step}
\end{equation}
the r.h.s. of this equation is identical to (\ref{eq:fh-BA-POMDP_rollout-distr})
except for the difference in between $\tilde{P}^{\pi}(\fhD d)$ and
$P^{\pi}(\fhD d)$. This can be resolved by forward induction with
base step: $\tilde{P}^{\pi}(\fhD 0)=b_{0}(\left\langle s_{0},\chi_{0},\psi_{0}\right\rangle )=P^{\pi}(\fhD 0)$,
and the induction step (show $\tilde{P}^{\pi}(\fhD{d+1})=P^{\pi}(\fhD{d+1})$
given $\tilde{P}^{\pi}(\fhD d)=P^{\pi}(\fhD d)$) directly following
from (\ref{eq:fh-BA-POMDP_rollout-distr}) and (\ref{eq:recursive-RS-P(H)__induction-step}).
Therefore we can conclude that $\forall_{d}\quad\tilde{P}^{\pi}(\fhD d)=P^{\pi}(\fhD d).$

Since Lemma~\ref{lem:fh-RS-BA-POMDP_convergence} establishes that
$\forall_{\fhD d}\quad\tilde{P}_{K_{d}}^{\pi}(\fhD d)\cip\tilde{P}^{\pi}(\fhD d)$,
we directly have
\[
\forall_{\fhD d}\quad\tilde{P}_{K_{d}}^{\pi}(\fhD d)\cip P^{\pi}(\fhD d),
\]
thus proving the result.
\end{proof}
The proof depends on the following lemma:
\begin{lem}
\label{lem:fh-RS-BA-POMDP_convergence} The full-history RS-BA-POMDP
rollout distribution converges in probability to the following quantity:
\begin{equation}
\forall_{\fhD d}\quad\tilde{P}_{K_{d}}^{\pi}(\fhD d)\cip b_{0}(s_{0})\left[\prod_{t=1}^{d}\pi(a_{t-1}|h_{t-0})\right]\left[\prod_{\left\langle s,a\right\rangle }\frac{B(\chi_{sa}(\fhD 0))}{B(\chi_{sa}(\fhD d)}\right]\label{eq:fh-RS-BA-POMDP-rollout-distribution_convergence}
\end{equation}
with $B(\alpha)=\frac{\Gamma(\alpha_{1}+\ldots\cdot+\alpha_{k})}{\Gamma(\alpha_{1})\cdot\ldots\cdot\Gamma(\alpha_{k})}$
the normalization term of a Dirichlet distribution with parametric
vector $\alpha$.\end{lem}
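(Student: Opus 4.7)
The plan is to identify, for a single simulation under root sampling, the probability $P^\pi(H_d)$ that this simulation produces the particular full history $H_d$, and then use the weak law of large numbers to conclude that the empirical distribution $\tilde{P}^\pi_{K_d}(H_d)$ converges in probability to $P^\pi(H_d)$. The heart of the lemma is therefore an analytic calculation of $P^\pi(H_d)$, which should match the right-hand side of (\ref{eq:fh-RS-BA-POMDP-rollout-distribution_convergence}).

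For the analytic step, I would condition on the root-sampled dynamics $\dot{D} \sim \mathrm{Dir}(\cdot \mid \chi_0)$ and note that, conditionally on $\dot{D}$, the probability of generating $H_d$ is simply
\[
b_0(s_0) \left[\prod_{t=0}^{d-1} \pi(a_t \mid \aohD{t})\right] \prod_{t=0}^{d-1} \dot{D}_{s_t a_t}^{s_{t+1} z_{t+1}}.
\]
Marginalising over $\dot{D}$ then amounts to integrating a product of Dirichlet densities against a monomial in $\dot{D}$. Because $\mathrm{Dir}(\dot{D}\mid\chi_0)$ factors over the $\langle s,a\rangle$ pairs, and because the exponents of the monomial at pair $\langle s,a\rangle$ are exactly the increments $\Delta_{sa}^{s'z}(H_d) = \chi_{sa}^{s'z}(H_d) - \chi_{sa}^{s'z}(H_0)$, the integral collapses via Dirichlet conjugacy to
\[
\prod_{\langle s,a\rangle} \int B(\chi_{sa}(H_0)) \prod_{s',z} (\dot{D}_{sa}^{s'z})^{\chi_{sa}^{s'z}(H_d)-1}\, d\dot{D}_{sa} \;=\; \prod_{\langle s,a\rangle} \frac{B(\chi_{sa}(H_0))}{B(\chi_{sa}(H_d))},
\]
using the standard identity $\int \prod_i x_i^{\alpha_i-1}\,dx = 1/B(\alpha)$. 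Combining this with the policy and initial-state factors outside the integral yields exactly the expression claimed on the right of (\ref{eq:fh-RS-BA-POMDP-rollout-distribution_convergence}).

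For the convergence step, note that $\tilde{P}^\pi_{K_d}(H_d)$ is an empirical average of i.i.d.\ indicator variables $\mathbb{I}_{\{H_d = H_d^{(i)}\}}$ with mean $P^\pi(H_d)$, conditional on reaching depth $d$. Since these indicators are bounded in $[0,1]$, the weak law of large numbers (or Hoeffding) gives convergence in probability as $K_d \to \infty$. Moreover, any $H_d$ of positive probability is reached by infinitely many simulations almost surely as the total number of simulations $K \to \infty$, so $K_d \to \infty$ along the relevant subsequence; the case $P^\pi(H_d)=0$ is trivial since $\tilde{P}^\pi_{K_d}(H_d)$ is then identically zero.

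The main obstacle I anticipate is the bookkeeping in the Dirichlet integral: one has to verify that the exponents bookkeep correctly at every $\langle s,a\rangle$ that is visited zero, one, or many times in $H_d$, and that the product over all $\langle s,a\rangle$ in the final formula is well-defined (each unused $\langle s,a\rangle$ contributes a factor of $1$ because $\chi_{sa}(H_d)=\chi_{sa}(H_0)$). A secondary subtlety is the consistency factor $\mathrm{Cons}(H_0,H_d)$ implicit in the theorem's recursive form: this is automatic here because any history generated by a simulation starting at $H_0$ is by construction consistent with $H_0$, so the indicator equals $1$ on the support of $P^\pi$.
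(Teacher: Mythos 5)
Your proposal is correct and follows essentially the same route as the paper: apply the weak law of large numbers to the indicator variables, then compute the per-simulation probability by conditioning on the root-sampled $\dot{D}$ and collapsing the integral via Dirichlet conjugacy, factoring over $\left\langle s,a\right\rangle$ pairs with exponents given by the count increments $\Delta$. Your closing remarks on the unused $\left\langle s,a\right\rangle$ pairs contributing factors of $1$ and on the $\text{Cons}(H_{0},H_{d})$ term being automatic on the support even add a touch more care than the paper's own write-up.
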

\begin{proof}
Via the weak law of large numbers, we have that the empirical mean
of a random variable converges in probability to its expectation.
\[
\forall_{\fhD d}\quad\tilde{P}_{K_{d}}^{\pi}(\fhD d)\cip\frac{1}{K_{d}}\sum_{i=1}^{K_{d}}\mathbb{I}_{\left\{ \fhD d=\fhDP di\right\} }\cip\mathbf{E}\left[\mathbb{I}_{\left\{ \fhD d=\fhDP di\right\} }\right]
\]
This expectation can be rewritten as follows
\begin{equation}
\mathbf{E}\left[\mathbb{I}_{\left\{ \fhD d=\fhDP di\right\} }\right]=\sum_{\fhDP di}\tilde{P}^{\pi}\left(\fhDP di\right)\mathbb{I}_{\left\{ \fhD d=\fhDP di\right\} }=\tilde{P}^{\pi}\left(\fhD d\right)
\end{equation}
where $\tilde{P}^{\pi}(\fhD d)$ denotes the (true, non-empirical)
probability that the RS-BA-POMDP rollout generates full history $\fhD d$.
This is an expectation over the root sampled model $\dot{D}$:
\begin{eqnarray}
\tilde{P}^{\pi}(\fhD d) & = & \int\tilde{P}^{\pi}\left(\fhD d|\dot{D}\right)Dir(\dot{D}|\dot{\chi})d\dot{D}\\
& = & \int\left[\text{Cons}(H_{0},H_{d})\prod_{t=1}^{d}\dot{D}(s_{t},z_{t}|s_{t-1},a_{t-1})\pi(a_{t-1}|h_{t-1})\right]Dir(\dot{D}|\dot{\chi})d\dot{D}\\
& = & \text{Cons}(H_{0},H_{d})\left[\prod_{t=1}^{d}\pi(a_{t-1}|h_{t-1})\right]\left(\int\left[\prod_{t=1}^{d}\dot{D}(s_{t},z_{t}|s_{t-1},a_{t-1})\right]Dir(\dot{D}|\dot{\chi})d\dot{D}\right)
\end{eqnarray}
Where $\text{Cons}(H_{0},H_{d})$ is a term that indicates whether
(takes value 1 if) $H_{d}$ is consistent with the full history at
the root $H_{0}$.\footnote{An earlier version of this proof~(\cite{oliehoek2014best})
contained a term $b_{0}(s_{0})$ instead of $\text{Cons}(H_{0},H_{d})$,
which fails to recognize that this proof assumes $H_{0}$ to be fixed.
See also the remark \vpageref{rem:fixed_H0}.}

Now we can exploit the fact that only the Dirichlet for the transitions
specified by $H_{d}$ matter.
\begin{align}
& \int\left[\prod_{t=1}^{d}\dot{D}(s_{t},z_{t}|s_{t-1},a_{t-1})\right]Dir(\dot{D}|\chi_{0})d\dot{D}\\
= & \text{\{split up the integral over one big vector into integrals over smaller vectors\}}\nonumber \\
& \int\dots\int\left[\prod_{t=1}^{d}\dot{D}_{s_{t-1},a_{t-1}}^{s_{t},z_{t}}\right]\left[\prod_{\left\langle s,a\right\rangle }Dir(\dot{D}_{sa}|\chi_{sa}(\fhD 0))\right]d\dot{D}_{s^{1}a^{1}}\dots d\dot{D}_{s^{\left|\mathcal{S}\right|}a^{\left|\mathcal{A}\right|}}\\
= & \text{\{reorder the transition probabilities: \ensuremath{\Delta_{\chi}^{sas'z}(\fhD d)}is the number of occurences of \ensuremath{\left(s,a,s',z\right)}in \ensuremath{\fhD d}\}}\nonumber \\
& \int\dots\int\left[\prod_{\left\langle s,a\right\rangle }\prod_{\left\langle s',z\right\rangle }\left(\dot{D}_{sa}^{s'z}\right)^{\Delta_{\chi}^{sas'z}(\fhD d)}\right]\left[\prod_{\left\langle s,a\right\rangle }Dir(\dot{D}_{sa}|\chi_{sa}(\fhD 0))\right]d\dot{D}_{s^{1}a^{1}}\dots d\dot{D}_{s^{\left|\mathcal{S}\right|}a^{\left|\mathcal{A}\right|}}\\
= & \int\dots\int\left[\prod_{\left\langle s,a\right\rangle }\prod_{\left\langle s',z\right\rangle }\left(\dot{D}_{sa}^{s'z}\right)^{\Delta_{\chi}^{sas'z}(\fhD d)}\right]\left[\prod_{\left\langle s,a\right\rangle }B(\dot{\chi}_{sa})\prod_{\left\langle s',z\right\rangle }\left(\dot{D}_{sa}^{s'z}\right)^{\chi_{0}^{sas'z}-1}\right]d\dot{D}_{s^{1}a^{1}}\dots d\dot{D}_{s^{\left|\mathcal{S}\right|}a^{\left|\mathcal{A}\right|}}\\
= & \int\dots\int\left[\prod_{\left\langle s,a\right\rangle }\left(\left[\prod_{\left\langle s',z\right\rangle }\left(\dot{D}_{sa}^{s'z}\right)^{\Delta_{\chi}^{sas'z}(\fhD d)}\right]\left[B(\dot{\chi}_{sa})\prod_{\left\langle s',z\right\rangle }\left(\dot{D}_{sa}^{s'z}\right)^{\chi_{0}^{sas'z}-1}\right]\right)\right]d\dot{D}_{s^{1}a^{1}}\dots d\dot{D}_{s^{\left|\mathcal{S}\right|}a^{\left|\mathcal{A}\right|}}\\
= & \int\dots\int\left[\prod_{\left\langle s,a\right\rangle }\left(B(\dot{\chi}_{sa})\left[\prod_{\left\langle s',z\right\rangle }\left(\dot{D}_{sa}^{s'z}\right)^{\Delta_{\chi}^{sas'z}(\fhD d)}\right]\left[\prod_{\left\langle s',z\right\rangle }\left(\dot{D}_{sa}^{s'z}\right)^{\chi_{0}^{sas'z}-1}\right]\right)\right]d\dot{D}_{s^{1}a^{1}}\dots d\dot{D}_{s^{\left|\mathcal{S}\right|}a^{\left|\mathcal{A}\right|}}\\
= & \int\dots\int\left[\prod_{\left\langle s,a\right\rangle }B(\dot{\chi}_{sa})\prod_{\left\langle s',z\right\rangle }\left(\dot{D}_{sa}^{s'z}\right)^{\chi_{0}^{sas'z}-1+\Delta_{\chi}^{sas'z}(\fhD d)}\right]d\dot{D}_{s^{1}a^{1}}\dots d\dot{D}_{s^{\left|\mathcal{S}\right|}a^{\left|\mathcal{A}\right|}}\label{eq:before-reorder}
\end{align}
Now we reverse the order of integration and multiplication, which
is possible since the different $s,a$ pairs over which we integrate
are disjoint.\footnote{E.g, consider two sets $A_{1}=\left\{ a_{1}^{(1)},a_{1}^{(2)}\right\} $
and $A_{2}=\left\{ a_{2}^{(1)},a_{2}^{(2)},a_{2}^{(3)}\right\} $.
Equation (\ref{eq:before-reorder}) is of the same form as
\begin{align*}
\sum_{a_{1}\in A_{1}}\sum_{a_{2}\in A_{2}}\prod_{i=1}^{2}a_{i} & =\sum_{a_{1}\in A_{1}}\sum_{a_{2}\in A_{2}}a_{1}a_{2}=a_{1}^{(1)}a_{2}^{(1)}+a_{1}^{(1)}a_{2}^{(2)}+a_{1}^{(1)}a_{2}^{(3)}+a_{1}^{(2)}a_{2}^{(1)}+a_{1}^{(2)}a_{2}^{(2)}+a_{1}^{(2)}a_{2}^{(3)}\\
& =a_{1}^{(1)}\left(a_{2}^{(1)}+a_{2}^{(2)}+a_{2}^{(3)}\right)+a_{1}^{(2)}\left(a_{2}^{(1)}+a_{2}^{(2)}+a_{2}^{(3)}\right)=\left(a_{1}^{(1)}+a_{1}^{(2)}\right)\left(a_{2}^{(1)}+a_{2}^{(2)}+a_{2}^{(3)}\right)\\
& =\left[\sum_{a_{1}\in A_{1}}a_{1}\right]\left[\sum_{a{}_{2}\in A_{2}}a_{2}\right]=\prod_{i=1}^{2}\sum_{a_{i}\in A_{i}}a_{i}
\end{align*}
} We obtain:
\begin{align}
= & \prod_{\left\langle s,a\right\rangle }B(\chi_{sa}(\fhD 0))\int\prod_{\left\langle s',z\right\rangle }\left(\dot{D}_{sa}(s',z)\right)^{\chi_{0}^{sas'z}+\Delta_{\chi}^{sas'z}(\fhD d)-1}d\dot{D}_{sa}\\
= & \text{\{since we integrate over the entire vector \ensuremath{\dot{D}_{sa}}, the integral equals \ensuremath{1/B(\chi_{sa}(\fhD 0)+\Delta_{\chi}^{sa}(\fhD d))}\}}\nonumber \\
& \prod_{\left\langle s,a\right\rangle }B(\chi_{sa}(\fhD 0))\frac{1}{B(\chi_{sa}(\fhD 0)+\Delta_{\chi}^{sa}(\fhD d))}\\
= & \prod_{\left\langle s,a\right\rangle }\frac{B(\chi_{sa}(\fhD 0))}{B(\chi_{sa}(\fhD d))}
\end{align}
Therefore
\begin{equation}
\tilde{P}^{\pi}(\fhD d)=\text{Cons}(H_{0},H_{d})\left[\prod_{t=0}^{d-1}\pi(a_{t}|h_{t})\right]\left[\prod_{\left\langle s,a\right\rangle }\frac{B(\chi_{sa}(\fhD 0))}{B(\chi_{sa}(\fhD d))}\right],\label{eq:RS-P(fhD)}
\end{equation}
proving (\ref{eq:fh-RS-BA-POMDP-rollout-distribution_convergence}).
\end{proof}

\bibliographystyle{abbrvnat}
\setcitestyle{authoryear,open={((},close={))}}
\bibliography{ref}

\end{document}